\DeclareFontFamily{U}{rcjhbltx}{}
\DeclareFontShape{U}{rcjhbltx}{m}{n}{<->rcjhbltx}{}
\DeclareSymbolFont{hebrewletters}{U}{rcjhbltx}{m}{n}
\DeclareMathSymbol{\mem}{\mathord}{hebrewletters}{109}
\DeclarePairedDelimiter\abs{\lvert}{\rvert}%
\DeclareFixedFont{\ttb}{T1}{txtt}{bx}{n}{8} 
\DeclareFixedFont{\ttm}{T1}{txtt}{m}{n}{8}  
\definecolor{deepblue}{rgb}{0,0,0.5}
\definecolor{deepred}{rgb}{0.6,0,0}
\definecolor{deepgreen}{rgb}{0,0.5,0}
\newcommand\pythonstyle{\lstset{
language=Python,
basicstyle=\ttm,
otherkeywords={self},             
keywordstyle=\ttb\color{deepblue},
emph={MyClass,__init__},          
emphstyle=\ttb\color{deepred},    
stringstyle=\color{deepgreen},
commentstyle=\color{deepred}\sffamily,
frame=tb,                         
showstringspaces=false,            %
mathescape=true
}}
\newcommand\pythoninline[1]{{\pythonstyle\lstinline!#1!}}
\DeclareFontFamily{U}{rcjhbltx}{}
\DeclareFontShape{U}{rcjhbltx}{m}{n}{<->rcjhbltx}{}
\DeclareSymbolFont{hebrewletters}{U}{rcjhbltx}{m}{n}
\DeclareMathSymbol{\mem}{\mathord}{hebrewletters}{109}
\DeclareMathSymbol{\mem}{\mathord}{hebrewletters}{109}
\font\tmp=rcjhbltx at10pt \textfont\hebfam=\tmp
\font\tmp=rcjhbltx at7pt  \scriptfont\hebfam=\tmp
\font\tmp=rcjhbltx at5pt  \scriptscriptfont\hebfam=\tmp
\edef\declfam{\ifcase\hebfam 
     0\or1\or2\or3\or4\or5\or6\or7\or8\or9\or A\or B\or C\or D\or E\or F\fi}
\mathchardef\shin   = "0\declfam 98
\newcommand{\assert}{\aleph}
\DeclarePairedDelimiter{\den}{\llbracket}{\rrbracket}
\theoremstyle{definition}
\newtheorem{definition}{Definition}
\theoremstyle{plain}
\newtheorem{theorem}{Theorem}
\newtheorem{question}{Question}
\newcommand{\rel}{\circ}
\DeclareMathOperator{\supp}{\mathrm{supp}}
\newcommand{\suppinv}{\supp^{{-}1}_L}
\newcommand{\upd}[1]{#1}
\newcommand{\uffda}[1]{#1}
\newif\iftaclinstructions
\newcommand{\instr}
\title{
Provable Limitations of Acquiring Meaning from Ungrounded Form:\\
What Will Future Language Models Understand?
}
\author{
    William Merrill\footnotemark[1] \quad Yoav Goldberg\footnotemark[1]\;\;\footnotemark[2] \quad Roy Schwartz\footnotemark[3] \quad Noah A. Smith\footnotemark[1]\;\;\footnotemark[4] \\
    \footnotemark[1]\;\;Allen Institute for AI \quad
    \footnotemark[2]\;\;Bar Ilan University \\
    \footnotemark[3]\;\;Hebrew University of Jerusalem \quad
    \footnotemark[4]\;\;University of Washington \\
  {\texttt{\{willm,yoavg,roys,noah\}@allenai.org}} \\
}
\date{\today}
\begin{document}
\maketitle
\begin{abstract}
Language models trained on billions of tokens have recently led to unprecedented results on many NLP tasks. This success raises the question of whether, in principle, a system can ever ``understand'' raw text without access to some form of grounding. We formally investigate the abilities of ungrounded systems to acquire meaning. Our analysis focuses on the role of ``assertions'': textual contexts that provide indirect clues about the underlying semantics. We study whether assertions enable a system to emulate representations preserving semantic relations like equivalence. We find that assertions enable semantic emulation of languages that satisfy a strong notion of semantic transparency. However, for classes of languages where the same expression can take different values in different contexts, we show that emulation can become uncomputable. Finally, we discuss differences between our formal model and natural language, exploring how our results generalize to a modal setting and other semantic relations. Together, our results suggest that assertions in code or language do not provide sufficient signal to fully emulate semantic representations. We formalize ways in which ungrounded language models appear to be fundamentally limited in their ability to ``understand''.
\end{abstract}

\section{Introduction} \label{sec:intro}
Recently, language models trained on huge datasets of raw text have pushed the limits of natural language processing \citep[among others]{devlin-etal-2019-bert, raffel2019exploring, brown2020language}. Such systems transcend the \emph{expert system} paradigm, where rules about language and meaning are hardcoded into a system, as well as the \emph{supervised learning} paradigm, where a notion of meaning is provided through ground-truth labels. Rather, analysis of massive language models has revealed that, to some degree, knowledge of syntactic and semantic dependencies can emerge \textit{without explicit supervision} \citep{rogers2020primer, tenney-etal-2019-bert}. This knowledge can then be transferred to a variety of downstream NLP tasks.

Yet, today's NLP systems built on large language models still fall short of human-level general understanding \citep{yogatama2019learning, zhang-etal-2020-winowhy}.
\citet{brown2020language} discuss the limitations of their GPT-3 language model compared to humans, suggesting that:

\begin{displayquote}
Scaling up any LM-like model \ldots\ may eventually run into (or could already be running into) the limits of the pretraining objective.
\end{displayquote}

\noindent This possibility raises an interesting theoretical question. What are the fundamental limits of learning meaning from language modeling, even assuming a perfect learner with access to unlimited data?
Recently, \citet{bk-2020} argued that achieving true natural language understanding from text alone is impossible, and that, to really get at meaning, some type of semantic grounding is necessary.\footnote{See \citet{blog2020} for a summary of the informal discussion around \citet{bk-2020}, much of which took place on social media.}
Their style of argumentation largely focused on developing thought experiments, rather than making formal arguments.



One thought experiment featuring prominently in \citet{bk-2020} was the task of learning to understand a programming language's semantics from raw code. Here, understanding was defined as fully emulating a compiler. This setup has clear parallels to learning to understand natural language, although the more well-defined nature of programming languages makes them easier to reason about. \citet{bk-2020} argue that emulation is difficult in this setting, and perhaps impossible, because the source code alone contains no information about how it should be interpreted to create outputs. One counterpoint raised by the paper, as well as others \citep{blog2020, potts2020}, is the existence of unit tests, with \textit{assertions} encoding examples of input/output pairs for blocks of code.\footnote{\upd{Unit tests are blocks of code in a software project that are designed to test whether the core code is behaving correctly.}} For example, systematically observing blocks like \pythoninline{x = 3; assert x == 3} could let a system bootstrap the semantics of variable assignment, because a programmer is likely to write assertions that will pass. \upd{These assertions constitute a form of implicit grounding embedded within language modeling by the pragmatic concerns of programmers, and they could potentially be leveraged to emulate a compiler.}\footnote{\upd{Contexts like assertions can be seen as an argument in favor of the distributional hypothesis \cite{harris1954distributional}.}}  However, it is not immediately clear if unit tests provide ``enough'' supervision to do this, even with unlimited data.


Viewing the debate about the power of assertions as central to the larger philosophical question, we aim to clarify it in more formal terms.
In this paper, we formally study whether observing a generalized notion of assertions can allow a system to ``understand'' strings. An assertion is a query about whether two strings evaluate to the same value within a fixed context.
This is motivated by the role of assertions in unit tests, where asserting two expressions are equal suggests that they have the same value within the test.

While assertions are directly motivated by the compiler thought experiment, they also have analogs in natural language, where sentences make assertions about the world, and it is reasonable to expect \upd{some form of} bias towards true statements \citep{potts2020}. Indeed, this is one of Grice's Maxims \citep{grice1975logic}: a set of basic principles proposed to govern the pragmatics of natural language.
For example, the truth conditions of \emph{This cat is the cat that Mary owns} verify that two cats in the world identified in distinct ways are the same entity.
In general, we might expect a sentence to appear with higher frequency if its truth conditions hold within its context, similar to an assertion in code, although of course there will also be other factors governing sentence frequency besides this.
In this sense, the example sentence resembles the Python statement \pythoninline{assert cat1 == cat2}, where \pythoninline{cat1} and \pythoninline{cat2} are two \pythoninline{Cat} objects.
See \autoref{sec:towards-nl} for more discussion of how assertions and other formal concepts translate to natural language.
We will generalize assertions to an abstract formal language context, allowing us to study how they can be used to emulate semantic relations.

Our findings are as follows. If every expression in a language has the same value in every valid context,
then the language can be emulated using a finite number of assertion queries (\autoref{sec:no-side-effects}). However, we construct a class of languages where expressions can take different values in different contexts, and where assertions do not enable emulation, i.e., infinite queries would be required (\autoref{sec:side-effects}). Intuitively, this means that assertions do not provide enough signal for a Turing-complete emulator to fully ``understand'' languages from this class. We go on to discuss differences between our formal model and the less well-defined context of natural language (\autoref{sec:towards-nl}).
These results provide a formal way to characterize upper bounds on whether it is possible to emulate the semantics of a language from distributional properties of strings.
\upd{Within our framework, in certain settings, we find that meaning cannot be learned from text alone.}
We strengthen claims made by \citet{bk-2020} that assertions in code do not necessarily provide sufficient signal for a language model to emulate understanding.
We do not make strong claims about how these results transfer to natural language, although we expect that the added complexity of natural language would make it, if anything, more difficult to ``understand'' than code.\footnote{\autoref{sec:old-emulation} documents and motivates conceptual changes since the original arXiv version of the paper.}
\section{Preliminaries}

Let $L \subseteq \Sigma^\star$ denote a formal language over alphabet $\Sigma$.
We will use $\lambda$ to denote the empty string.

Let $(\Sigma^\star)^2$ denote the Cartesian product of $\Sigma^\star$ with itself; i.e., the set of all pairs of strings.
Resembling \citet{clark2010three}, we refer to a tuple $\langle l, r \rangle \in (\Sigma^\star)^2$ as a syntactic \emph{context}. We also use other symbols to refer to a context, e.g., $\kappa = \langle l, r \rangle$. We denote by $\lambda^2$ the empty context $\langle \lambda, \lambda \rangle$.

\subsection{Meaning}
We will model formal languages not just as sets of strings, but as having an associated semantics.\footnote{\uffda{We slightly abuse notation by using $L$ to refer to both a set of strings, and a set of strings paired with a denotation function, which could be written more verbosely as $\langle L, \den{\cdot}_L \rangle$.}}
Specifically, we assume the existence of a \emph{denotational semantics} over every substring of $L$, which we now elaborate on.
Let $Y$ be a countable set of referents.
First, we will say that some $e \in \Sigma^\star$ is a valid \emph{expression} within the context $\kappa = \langle l, r \rangle$ if there exists some contextual denotation $\den{e \mid \kappa}_L \in Y$. Intuitively, this represents the value of $e$ when it occurs in the larger context $ler \in L$. We will also use the notation $\den{e \mid l, r}_L$ where convenient.
We will reserve $\emptyset \in Y$ as a special null symbol, defining $\den{e \mid \kappa}_L = \emptyset$ iff $e$ is not a valid expression in the context $\kappa$.\footnote{\upd{Our simple model of denotations does not reflect the full range of semantic theories that have been proposed for natural language. In particular, our denotations $\den{e \mid \kappa}_L$ depend only on the linguistic context $\kappa$ rather than any external world state. This differs substantially from how truth conditions are traditionally conceptualized in formal semantics \citep{Heim1998-HEISIG}. For example, in our framework, the referent of English $\den{\textit{the dog} \mid \kappa}_L$ must be fixed with no regard for the extralinguistic context. \autoref{sec:towards-nl} further contrasts our setup with the richer semantics of natural language.}}

Each context $\kappa \in (\Sigma^\star)^2$ also has a \emph{support}, or set of expressions that are valid within it:
\begin{equation*}
    \supp_L(\kappa) = \{ e \in \Sigma^\star \mid \den{e \mid \kappa}_L \neq \emptyset \} .
\end{equation*}

\paragraph{Example}
Let $L$ be a language of integers along with the \pythoninline{+} operator, e.g., \pythoninline{2 + 2}.
$Y$ is simply the integers.
We take $\den{e \mid \kappa}_L$ to map $e$ to its standard arithmetic interpretation, i.e., $\den{\pythoninline{2 + 6} \mid \lambda, \pythoninline{+ 4}}_L = 8$.
We take expressions that are not conventionally well-formed to be invalid: e.g., $\den{\pythoninline{+} \mid \lambda, \pythoninline{+}}_L = \emptyset$.
Finally, let $\kappa = \langle \lambda, \pythoninline{+ 4}\rangle$. Then $\supp_L(\kappa) = L$, since any valid expression can occur within $\kappa$.

\subsection{Strong Transparency}

As defined above, we make very few assumptions about denotations.
They are not necessarily compositional, and expressions may take different referents in different contexts.
However, we saw in the integer expression language that the meanings of an expression did not depend on its context.
We now define a property formalizing this idea.



\begin{definition}[Strong transparency] \label{def:ref-trans}
$L$ is \emph{strongly transparent} iff,
for all $e \in \Sigma^\star$,
$\kappa \in (\Sigma^\star)^2$, either $\den{e \mid \kappa}_L = \den{e \mid \lambda^2 }_L \neq \emptyset$, or $\den{e \mid \kappa}_L = \emptyset$.
\end{definition}

\noindent Informally, strong transparency says each $e$ has a well-defined denotation that exists independent of context, and that this simple denotation can be ``plugged into'' any context.
Our previous example expression $\pythoninline{2 + 6}$ is strongly transparent because it can be said to have a well-defined value $8$ independent of its context. We could break strong transparency by adding bound variables to the language, e.g. $\pythoninline{x = 2; x + 6}$ in Python. In this case, $\den{\pythoninline{x} \mid \kappa}_L$ \upd{non-vacuously} depends on $\kappa$.

\uffda{
Strong transparency resembles referential transparency \citep{russell25}, but is a stronger condition, in that it does not allow the same name to \emph{ever} refer to different values. For example, for a Python program, strong transparency does not allow assigning local variables within a function, even if the function output would remain completely specified by its inputs.
}

\subsection{Assertion Queries}

We now define an oracle function providing assertion information about expressions in $L$, resembling \pythoninline{assert e1 == e2} for two Python expressions \pythoninline{e1}, \pythoninline{e2}. A system is granted access to this function, and it can make \textit{assertion queries} to it in order to learn about the semantics of $L$.\footnote{This resembles the role of queries in classical grammar induction works \citep[e.g.,][]{angluin1987queries}.}
An assertion query tells us whether two expressions $e, e'$ are equivalent within the context $\kappa$.

\begin{definition}[Assertion oracle]
For $e, e', \in \Sigma^\star$ and $\kappa \in (\Sigma^\star)^2$, define the \emph{assertion oracle}
\begin{equation*}
    \assert_L(e, e' \mid \kappa) =
    \begin{cases}
        1 & \textrm{if} \; \den{e \mid \kappa}_L = \den{e' \mid \kappa}_L \\
        0 & \textrm{otherwise} .
    \end{cases}
\end{equation*}
\end{definition}

Recall that we defined $\den{e \mid \kappa}_L = \emptyset$ if $e$ is not valid in the context $\kappa$. 
In our example language of integer expressions, for all $\kappa$,  $\assert_L(\pythoninline{4}, \pythoninline{2 + 2} \mid \kappa) = 1$, since $4 = 2 + 2$. The computational power of this oracle depends on the complexity of the underlying semantics: for arbitrary semantics, it can become uncomputable.
In this paper, though, we focus on classes of languages for which the denotation function and assertion oracle are computable.

The $\assert_L$ oracle is motivated by assertion statements in programming languages, which occur naturally in environments like unit tests.
The distribution of strings in a corpus of code should capture some notion of this oracle,
since a programmer is more likely to assert two expressions are equal if they are expected to have the same value.
Our goal is to study the limits of understanding achievable from raw text, so we consider an ``upper bound'' setup by assuming a system has full access to $\assert_L$.
Can the system use this powerful oracle to emulate the underlying semantics?

\subsection{Turing Machines}
Our notion of language understanding will be based around the idea of emulation, which in turn requires a model of computational realizability.
We will use Turing machines \citep{turing1936computable} as a model of universal computation.
We write $\mu(e)$ for the output of Turing machine $\mu$ evaluated on input $e \in \Sigma^\star$.
We will also define an oracle Turing machine as a standard Turing machine that can compute a blackbox ``oracle'' function $f$ as a subroutine. We imagine the machine has a special \emph{query} instruction and tape. After writing $x$ to the query tape and executing the query instruction, the query tape will contain $f(x)$.
We will write $\mu_f(e)$ for the Turing machine $\mu$ evaluated on input $e$ with oracle access to $f$.
In the case where $f = \aleph_L$, we will simply write $\mu_L(e)$.
Whereas, in computability theory, oracle Turing machines are generally leveraged to make reductions from uncomputable problems, here we will use them to formalize the ability of an emulator to make assertion queries about $L$. This oracle provides additional power because these queries contain additional information beyond that encoded in the input expression.

\section{Research Question: Do Assertions Enable Emulation?} \label{sec:question}

There is a long history in AI of trying to define and measure understanding. \citet{turing1950computing} constitutes an early \upd{behaviorist} perspective; more recent approaches tend to emphasize not just an external view of a system's behavior, but also ``how it is achieved'' \citep{levesque2014}. 
Understanding can be behaviorally diagnosed in neural models by evaluating them on benchmarks \citep{wang-etal-2018-glue}.
An alternate approach is probing \citep{adi2017fine, conneau-etal-2018-cram, ijcai2018-796, hewitt-liang-2019-designing, belinkov2019analysis}, which investigates \textit{how directly} a model's representations encode semantic relations by measuring if they can be easily decoded from them.
Similarly, 
we take the position that systems are capable of understanding if they \emph{emulate} representations that are isomorphic to underlying meaning under important semantic relations like equivalence. We will formalize this in \autoref{qu:computing-meaning}, which asks whether such emulation is possible using assertions.

\begin{figure}
    \centering
    \includegraphics[width=\columnwidth]{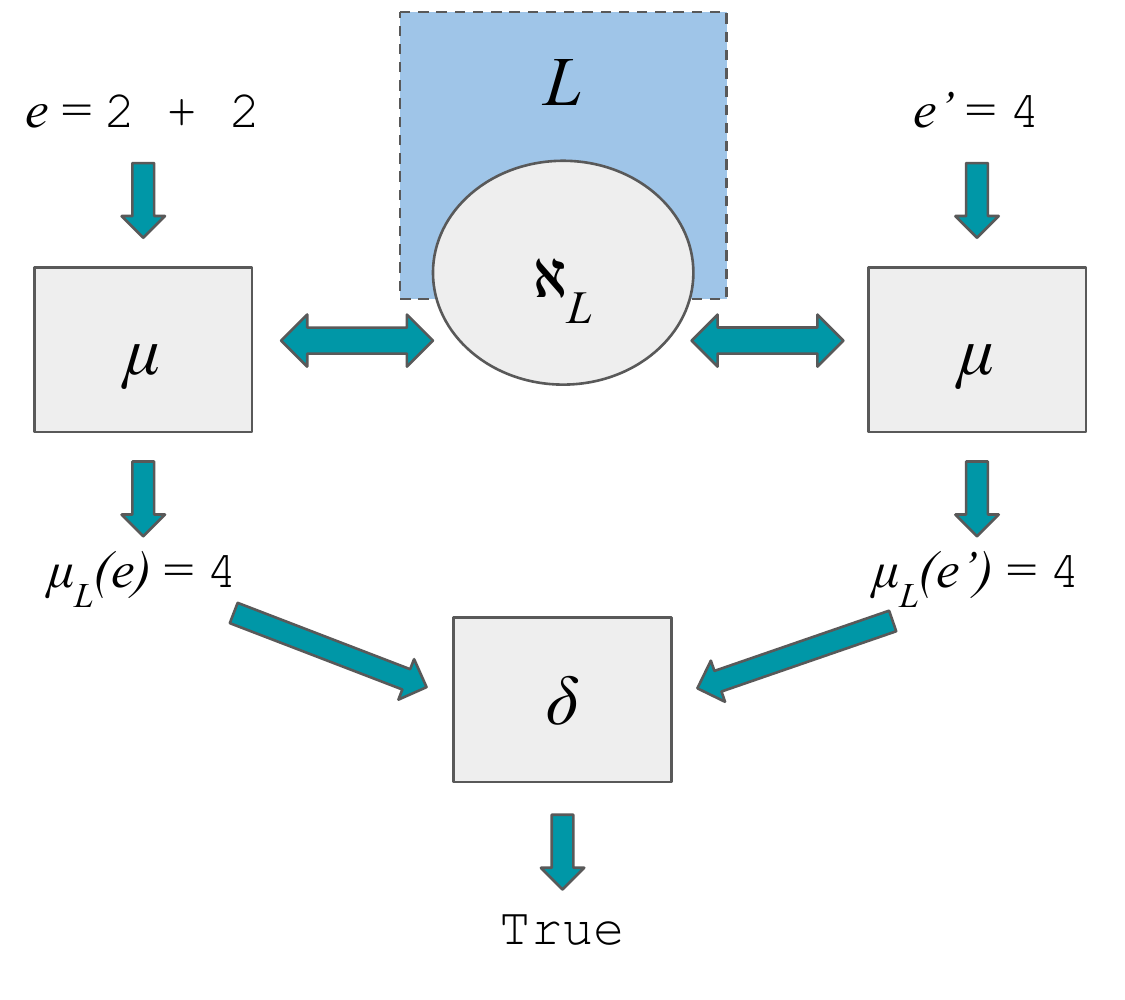}
    \caption{An illustration of \autoref{def:emulation}. $\mu$ emulates a representation of each expression using assertion queries. Then, $\delta$ compares the emulated representations to determine equivalence.}
    \label{fig:emulation}
\end{figure}

\begin{figure*}[ht!]
    \centering
    \begin{python}
from typing import Callable

AssertType = Callable[[str, str, str, str], bool]

def emulate(expr: str, asserteq: AssertType) -> int:
    for idx, cand in enumerate(all_strings()):
            if asserteq(expr, cand, "", ""):
                return idx
    \end{python}
    \caption{\pythoninline{emulate} implements an emulator $\mu$. Let \pythoninline{all\_strings} be an iterable enumerating all strings in $\Sigma^\star$. We provide a concrete implementation of \pythoninline{all\_strings} in \autoref{fig:subroutines}.}
    \label{fig:countable}
\end{figure*}

\uffda{
\begin{definition}[$\assert$-emulation] \label{def:emulation}
A class of languages $\mathcal L$ over $\Sigma$ is \emph{$\assert$-emulatable} if there exists an oracle Turing machine $\mu$ and standard Turing machine $\delta$ such that, for all $L \in \mathcal L$,
$\kappa \in (\Sigma^\star)^2$, and $e, e' \in \supp_L(\kappa)$,
\begin{align*}
    \den{e \mid \kappa}_L = \den{e' \mid \kappa}_L \iff  \delta \big( \mu_L(e), \mu_L(e') \mid \kappa \big) .
\end{align*}
\end{definition}
$\mu$ can be thought of as an emulator that evaluates expressions, whereas $\delta$ receives two values and decides whether they are equal. Crucially, only $\mu$ has direct access to $\assert_L$. $\delta$ can only use information from the oracle to the extent that it is encoded in the representations $\mu_L(e)$ and $\mu_L(e')$.}

\uffda{
\autoref{def:emulation} formulates emulation as a decision problem, as is typical in theoretical computer science. Equivalently, $\delta$ can be replaced by a computable function $\rho$ such that $\rho(\mu_L(e) \mid \kappa)$ \emph{evaluates} $\mu_L(e)$ in context $\kappa$, i.e., its output string is isomorphic to $\den{e \mid \kappa}_L$ under $=$. The functions $\delta$ and $\rho$ are Turing-reducible to each other, implying that if one definition is satisfied, so is the other.}

With our definition of emulation in place, we can formally state the research question:
\begin{question} \label{qu:computing-meaning}
For a class of languages $\mathcal L$, is $\mathcal L$ $\assert$-emulatable?
\end{question}
How does \autoref{qu:computing-meaning} relate to understanding in large language models? We imagine that, with sufficiently large amounts of data, the frequencies of strings in $L$ carry enough signal such that the language model objective ``supervises'' access to $\assert_L$.
Thus, $\mu_L(e)$ can be thought of as the language model representation of an expression $e$.
We then hope to recover underlying semantic relations from the representations produced by the language model via some function $\delta$.
The class $\mathcal L$ corresponds to a set of hypothesis languages over which the language model must search for the true $L$. We will see that whether emulation is possible will depend on the properties of $\mathcal L$.

Stepping back, \autoref{qu:computing-meaning} bears on the role of assertions raised by \citet{bk-2020}. Does observing assertions allow a Turing-complete system to emulate a compiler? \upd{In more general terms, are assertions powerful enough implicit grounding to achieve representations that encode the denotational semantics of a language?}
\section{Strong Transparency} \label{sec:no-side-effects}

\uffda{We first consider the case where the language being learned is known to be strongly transparent. Let \textsc{Transparent} denote the class of strongly transparent languages. We will show that \textsc{Transparent} is $\aleph$-emulatable.} The core idea of the proof is to construct a canonical form for each expression. The canonical form is the first expression in a lexicographic ordering that the assertion oracle deems equivalent to the target expression. For technical reasons, the emulator returns the index of this string under the lexicographic order.

\begin{theorem} \label{thm:transparency}
\textsc{transparent} is $\aleph$-emulatable.
\end{theorem}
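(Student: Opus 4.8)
The plan is to exhibit a single oracle Turing machine $\mu$ and a single standard Turing machine $\delta$, both independent of the particular $L \in \textsc{Transparent}$, satisfying \autoref{def:emulation}. The one substantive ingredient is that strong transparency (\autoref{def:ref-trans}) collapses contextual denotations onto context-free ones: whenever $e \in \supp_L(\kappa)$ we have $\den{e \mid \kappa}_L = \den{e \mid \lambda^2}_L \neq \emptyset$. Consequently, for $e, e' \in \supp_L(\kappa)$ the target equivalence $\den{e \mid \kappa}_L = \den{e' \mid \kappa}_L$ holds if and only if $\aleph_L(e, e' \mid \lambda^2) = 1$, so it is enough for $\mu$ to distill from the oracle a representation that pins down the $\lambda^2$-denotation up to equality, and for $\delta$ to test whether two such representations coincide.

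For $\mu$, I would fix a computable length-lexicographic enumeration $s_0, s_1, s_2, \dots$ of $\Sigma^\star$; on input $e$, $\mu$ queries $\aleph_L(e, s_i \mid \lambda^2)$ for $i = 0, 1, 2, \dots$ and returns the first index $i$ whose answer is $1$ (this is precisely the pseudocode of \autoref{fig:countable}). The first point to verify is that this halts on every $e \in \supp_L(\kappa)$: strong transparency gives $\den{e \mid \lambda^2}_L \neq \emptyset$, and $\aleph_L(e, e \mid \lambda^2) = 1$, so the search terminates no later than the index of $e$ itself. Write $c(e)$ for the output; then $s_{c(e)}$ is the length-lexicographically least string with the same $\lambda^2$-denotation as $e$, and that denotation is nonempty, so $s_{c(e)}$ is itself a valid expression and $c$ is a well-defined canonical-form map on $\bigcup_\kappa \supp_L(\kappa)$.

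For $\delta$, take $\delta(m, m' \mid \kappa)$ to accept iff $m = m'$, ignoring $\kappa$. Correctness has two directions. Forward: if $\den{e \mid \kappa}_L = \den{e' \mid \kappa}_L$, strong transparency gives $\den{e \mid \lambda^2}_L = \den{e' \mid \lambda^2}_L$; since equality of denotations is an equivalence relation, $e$ and $e'$ select the same least equivalent string, so $\mu_L(e) = c(e) = c(e') = \mu_L(e')$ and $\delta$ accepts. Converse: if $\mu_L(e) = \mu_L(e')$ then $\den{e \mid \lambda^2}_L = \den{s_{c(e)} \mid \lambda^2}_L = \den{e' \mid \lambda^2}_L$, and applying strong transparency once more (using $e, e' \in \supp_L(\kappa)$) yields $\den{e \mid \kappa}_L = \den{e' \mid \kappa}_L$. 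Since $\mu$ and $\delta$ depend on $L$ only through oracle calls, the same pair works for every $L \in \textsc{Transparent}$, establishing $\aleph$-emulatability.

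I do not expect a genuine obstacle here — all the force is in the denotational collapse guaranteed by strong transparency — but the write-up should be careful about three things: (i) that $\mu$ halts on exactly the inputs for which \autoref{def:emulation} demands an answer, which is why the restriction to $\supp_L(\kappa)$ is essential (on invalid $e$, $\mu$ may loop forever, which is harmless); (ii) that a single pair $(\mu,\delta)$ works uniformly across the class, which holds because the only $L$-dependence is the black-box oracle; and (iii) that the fixed enumeration of $\Sigma^\star$ and the returned index are honestly computable, so $\mu$ is a bona fide oracle machine producing finite strings that $\delta$ can compare.
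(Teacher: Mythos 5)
Your proposal is correct and takes essentially the same route as the paper: enumerate $\Sigma^\star$ in a fixed order, have $\mu$ return the index of the first candidate that $\aleph_L(\cdot,\cdot\mid\lambda^2)$ declares equivalent to the input, take $\delta$ to be equality of indices, and close the loop via strong transparency collapsing every context onto $\lambda^2$. The only cosmetic difference is that the paper notes $\mu$ halts on all of $\Sigma^\star$ (by reflexivity of the assertion at index $e$ itself), whereas you restrict the halting claim to $\supp_L(\kappa)$; both are fine since \autoref{def:emulation} only quantifies over $\supp_L(\kappa)$.
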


\begin{proof}
As Python is Turing-complete, we write $\mu : \Sigma^\star \rightarrow \mathbb{N}$ as a Python function \pythoninline{emulate} in \autoref{fig:countable}.
The function receives as input an expression \pythoninline{expr} and a callback function \pythoninline{asserteq} to an oracle computing $\assert_L$. For each $e \in \Sigma^\star$, there exists $e^\star \in \Sigma^\star$ such that $\assert_L(e, e^\star \mid \lambda^2) = 1$. In the ``worst case'', this holds when $e^\star = e$ by symmetry. By construction, \pythoninline{all\_strings} reaches all strings in finite time. Therefore, the number of loop iterations before reaching $e^\star$ is finite. We can conclude that \pythoninline{emulate} halts on every $e \in \Sigma^\star$, establishing that it is computable.

Now, we move towards justifying that the emulation is correct for every $\kappa \in (\Sigma^\star)^2$.
We note that $\delta$ is simply the indicator function for equality over the natural numbers:
\begin{equation*}
    \delta(m, m' \mid \kappa) =
    \begin{cases}
        1 & \textrm{if} \; m = m' \\
        0 & \textrm{otherwise} .
    \end{cases}
\end{equation*}
The function \pythoninline{emulate} outputs $i \in \mathbb{N}$, the index of the first string $e^\star$ such that
$\den{e \mid \lambda^2 }_L = \den{e^\star \mid \lambda^2 }_L$.
Now, let $e, e' \in \supp_L(\kappa)$ be different inputs to $\mu$.
Because the enumeration order of the \pythoninline{for} loop is fixed across computation of $\mu_L(e)$ and $\mu_L(e')$:
\begin{align*}
    \mu_L(e) = \mu_L(e')
    &\iff \den{e \mid \lambda^2}_L = \den{e^\star \mid \lambda^2}_L \\
    &\quad \quad \wedge \den{e' \mid \lambda^2}_L = \den{e^\star \mid \lambda^2}_L \label{eq:pre-equiv-rel} \\
    &\iff \den{e \mid \lambda^2}_L = \den{e' \mid \lambda^2}_L \\
    &\iff \den{e \mid \kappa}_L = \den{e' \mid \kappa}_L ,
\end{align*}
\noindent where the last step follows by strong transparency.
We conclude that the conditions for emulation (\autoref{def:emulation}) are fully satisfied.
\end{proof}

Through a simple construction,
we have shown it is possible to emulate meaning from assertion queries for languages with strongly transparent semantics.
The number of bits in the emulated representation $\mu_L(e)$ is linear in the size of $e$.
In the next section, we consider what happens without strong transparency, where, among other complexities, values can be bound to variables, complicating the construction used in \autoref{thm:transparency}.
\section{General Case} \label{sec:side-effects}

\begin{figure*}[ht!]
\begin{minipage}{.5\textwidth}
\centering
\begin{python}
def leq() -> bool:
    return $\color{red}n$ < M
print(leq())
\end{python}
\end{minipage}%
\begin{minipage}{.5\textwidth}
\centering
\begin{python}
def leq() -> bool:
    return $\color{red}n$ < M
print(True)
\end{python}
\end{minipage}
\caption{Templates for strings in $L_m$, for $m \in \mathbb{N} \cup \{ \infty \}$. \pythoninline{M} evaluates to $m$ in all strings, while other expressions are evaluated according to Python 3.8 semantics. The metavariable $n$ ranges over $\mathbb{N}$ to form different strings in $L_m$, and is serialized as a decimal string.}
    \label{fig:template}
\end{figure*}

Requiring strong transparency precludes a broad class of linguistic patterns allowing an expression to refer to different values in different contexts.
For example, this includes assigning variable or function names in Python, or binding pronouns in natural language. These constructions can make emulation impossible to achieve from assertions. 
We will construct a class of languages based on Python where emulation is uncomputable.

\begin{definition} \label{def:template}
Let $\textsc{Leq} = \{ L_m \mid m \in \mathbb{N} \cup \{ \infty \} \}$, where strings in $L_m$ are defined according to \autoref{fig:template}.
For semantics, we first define $\den{\pythoninline{M} \mid \kappa}_{L_m} = m$.
For any other $ler \in L_m$ that is a well-formed Python 3.8 expression, we define $\den{e \mid l, r}_{L_m}$ as the value of $e$ assigned by the Python interpreter in the context $\langle l, r \rangle$.
For strings that are not valid Python expressions, define $\den{e \mid l, r}_{L_m} = \emptyset$.
\end{definition}

\uffda{
What does it take to emulate the expressions \pythoninline{leq()} and \pythoninline{True} in $L_m$?
If we knew $m$, then we could emulate them by simply comparing $n < m$.
However, it turns out that recovering $m$ for any $L_m \in \textsc{Leq}$ is not possible with a fixed number of assertion queries.
Formalizing this, we will show that \textsc{Leq} is not $\aleph$-emulatable.}\footnote{Another example of a non-$\aleph$-emulatable language takes \pythoninline{M} to be a finite list of integers and replaces \pythoninline{$\color{red} n\;$ < M} with \pythoninline{$\color{red} n \;$ in M}.}

\begin{theorem} \label{thm:side-effects}
\textsc{Leq} is not $\aleph$-emulatable.
\end{theorem}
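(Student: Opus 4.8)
The plan is a diagonalization (adversary) argument showing that no finite amount of assertion information can pin down the parameter $m$. Suppose, for contradiction, that an oracle machine $\mu$ and decider $\delta$ witness $\aleph$-emulation of \textsc{Leq}. The relevant context is $\kappa_n = \langle \ell_n, \text{\pythoninline{)}}\rangle$, where $\ell_n$ is the common prefix \pythoninline{def leq() -> bool:} $\ldots$ \pythoninline{return n < M} $\ldots$ \pythoninline{print(} with the concrete numeral $n$ substituted; both \pythoninline{leq()} and \pythoninline{True} lie in $\supp_{L_m}(\kappa_n)$ for every $m$, since they complete $\kappa_n$ to the two templates of \autoref{def:template}, and $\den{\text{\pythoninline{leq()}} \mid \kappa_n}_{L_m}$ is the truth value of $n < m$ while $\den{\text{\pythoninline{True}} \mid \kappa_n}_{L_m} = \mathtt{True}$. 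Hence \autoref{def:emulation} forces $\delta\big(\mu_{L_m}(\text{\pythoninline{leq()}}), \mu_{L_m}(\text{\pythoninline{True}}) \mid \kappa_n\big)$ to output $1$ exactly when $n < m$, for all $n \in \mathbb{N}$ and $m \in \mathbb{N} \cup \{\infty\}$.

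First I would run $\mu_{L_\infty}$ on \pythoninline{leq()} and on \pythoninline{True}; both halt (these expressions are valid in some context, so emulation requires $\mu_{L_\infty}$ to halt on them), so together they issue only finitely many queries to $\aleph_{L_\infty}$. Let $N$ be strictly larger than every numeral that occurs in any of these queries. The crux is the following stability lemma: for every \emph{finite} $m > N$, the oracle $\aleph_{L_m}$ returns the same answer as $\aleph_{L_\infty}$ on each of these queries. Granting the lemma, determinism of $\mu$ lets us reproduce its entire query/answer trace step by step, so $\mu_{L_m}(\text{\pythoninline{leq()}}) = \mu_{L_\infty}(\text{\pythoninline{leq()}})$ and $\mu_{L_m}(\text{\pythoninline{True}}) = \mu_{L_\infty}(\text{\pythoninline{True}})$ for every finite $m > N$. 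To prove the lemma, observe that $\supp_{L_m}(\kappa)$ is independent of $m$ (validity is just Python well-formedness and availability of a value for \pythoninline{M}), so an assertion answer can depend on $m$ only when it compares two in-context expressions whose denotations genuinely depend on $m$; inspecting the templates, a denotation depends on $m$ only through an occurrence of \pythoninline{M}, and the only such comparisons a context permits are (i) \pythoninline{leq()} versus \pythoninline{True} inside some $\kappa_n$, whose answer agrees with the $L_\infty$ answer as soon as $m > n$, and (ii) \pythoninline{n < M} versus \pythoninline{n' < M} (concrete numerals $n, n'$) in a shared context, whose answer agrees with the $L_\infty$ answer as soon as $m > \max(n,n')$. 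Since $N$ exceeds every numeral mentioned, all of $\mu_{L_\infty}$'s queries fall into case (i) or (ii) with the relevant numerals below $N$, hence are $m$-independent for finite $m > N$.

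Finally I would take $m = N+1$ and compare emulation behavior at the context $\kappa_{N+1}$. Emulation for $L_\infty$ gives $\delta\big(\mu_{L_\infty}(\text{\pythoninline{leq()}}), \mu_{L_\infty}(\text{\pythoninline{True}}) \mid \kappa_{N+1}\big) = 1$, because $N+1 < \infty$; emulation for $L_{N+1}$ gives $\delta\big(\mu_{L_{N+1}}(\text{\pythoninline{leq()}}), \mu_{L_{N+1}}(\text{\pythoninline{True}}) \mid \kappa_{N+1}\big) = 0$, because $N+1 \not< N+1$. But $N+1 > N$, so the stability lemma makes the two tuples fed to $\delta$ identical, whence $\delta$ must return the same bit in both cases — contradicting $1 \neq 0$. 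I expect the stability lemma to be the main obstacle: one must carefully enumerate the shapes of assertion queries over \textsc{Leq} and verify that each query sensitive to $m$ is sensitive only for $m$ bounded by a numeral syntactically present in the query, so that any finite query set leaves all sufficiently large finite $m$ — and $\infty$ — mutually indistinguishable; the rest is bookkeeping about the rigid structure of the templates and the fact that $\delta$ makes no oracle calls of its own.
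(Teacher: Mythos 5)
Your proposal is, at its core, the same adversarial argument as the paper's: run $\mu$ on $L_\infty$, collect the finitely many oracle queries, pick a finite $m$ exceeding every numeral those queries mention so that $\aleph_{L_m}$ and $\aleph_{L_\infty}$ agree on the whole transcript, and then exhibit a context $\kappa_{N+1}$ where the required $\delta$-output differs between $L_\infty$ and $L_{N+1}$ even though the tuples fed to $\delta$ are identical. Where you diverge is mainly in rigor: the paper states tersely that $\mu_\infty$ makes a finite number of queries of the specific shape $\assert_\infty(\text{\pythoninline{leq()}}, \text{\pythoninline{True}} \mid n_i)$ and simply sets $m' = n_q + 1$, while you correctly observe that $\mu$ may issue queries of arbitrary shape and therefore isolate an explicit stability lemma, arguing that any query's answer stabilizes once $m$ exceeds the numerals syntactically present in it. Your case enumeration (i)/(ii) for $m$-sensitive queries is not obviously exhaustive (one should also dispose of comparisons touching \pythoninline{M} directly, partial expressions, whole programs, and $\emptyset$-valued mismatches), but you flag this yourself as the bookkeeping obstacle, and the underlying claim — that the rigid two-template structure of $L_m$ forces every query to be $m$-insensitive beyond a threshold determined by numerals it contains — is correct and is exactly what the paper's shorter argument presupposes. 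So this is the paper's proof, made a bit more honest about the indistinguishability step.
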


\begin{proof}
\uffda{
Without loss of generality, we focus on the contexts for \pythoninline{leq()}\footnote{The only ``valid'' context for \pythoninline{leq()} is within \pythoninline{print($\cdot$)}. The denotation of \pythoninline{leq()} when it occurs next to \pythoninline{def} is $\emptyset$.} and \pythoninline{True} within \pythoninline{print($\cdot$)},
each of which is parameterized by some value of $n$.
Notationally, we identify each $L_m$ with $m$, and each context with its parameter $n$. This enables shorthand like
$\den{e \mid n}_m$ for the denotation of the expression $e$ in the context parameterized by $n$ in $L_m$.}

\uffda{
When $m = \infty$, it holds for all $n$ that $\assert_\infty(\pythoninline{leq()}, \pythoninline{True} \mid n) = 1$.
To satisfy emulation of $e \in \{\pythoninline{leq()}, \pythoninline{True}\}$,
$\mu_\infty$ makes a finite number of assertion queries
\begin{equation*}
    \assert_\infty(\pythoninline{leq()}, \pythoninline{True} \mid n_i) .
\end{equation*}
for some sequence of contexts $n_1, \cdots, n_q$, which we assume without loss of generality is sorted in increasing order.
We can adversarially construct $m' \neq \infty$ such that all these queries are the same, and thus $\mu_\infty(e) = \mu_{m'}(e)$ for both $e$. To implement this, we simply set $m' = n_q + 1$. Since $\mu_\infty(e) = \mu_{m'}(e)$, we conclude that, for all $n$,
\begin{align*}
    \delta( \mu_{m'}(\pythoninline{leq()}), \mu_{m'}(\pythoninline{True}) \mid n ) = \\ 
    \delta( \mu_\infty(\pythoninline{leq()}), \mu_\infty(\pythoninline{True}) \mid n ) .
\end{align*}
On the other hand, consider $n > n_q$. In this case,
\begin{align*}
    \den{\pythoninline{leq()} \mid n}_{m'} &= \pythoninline{False} \\
    \den{ \pythoninline{leq()} \mid n}_\infty &=  \pythoninline{True} ,
\end{align*}
which can be rewritten as
\begin{align*}
    \den{\pythoninline{leq()} \mid n}_{m'} &\neq \den{\pythoninline{True} \mid n}_{m'} \\
    \den{ \pythoninline{leq()} \mid n}_\infty &= \den{\pythoninline{True} \mid n}_\infty .
\end{align*}
Therefore, the conditions of $\aleph$-emulation (\autoref{def:emulation}) cannot be satisfied for both $L_{m'}$ and $L_\infty$. This implies that \textsc{Leq} is not $\aleph$-emulatable.
}
\end{proof}

\begin{figure*}
\begin{minipage}{.5\textwidth}
\centering
\begin{lstlisting}[mathescape=true,frame=tb,basicstyle=\small]
There is a number.
$\color{red} n$ is less than it.
\end{lstlisting}
\end{minipage}%
\begin{minipage}{.5\textwidth}
\centering
\begin{lstlisting}[mathescape=true,frame=tb,basicstyle=\small]
There is a number.
Zero equals one.
\end{lstlisting}
\end{minipage}
\caption{An informal construction adapting the program templates in \autoref{fig:template} to English. \upd{Under our framework, two sentences are considered equivalent if they are true in exactly the same set of contexts. If the number is allowed to be $\infty$, this cannot be done in general for the final lines of each template.}}
\label{fig:english-template}
\end{figure*}

\subsection{Discussion}

We briefly summarize this result in less formal terms.
\textsc{Leq} contains languages $L_m$ defined by \autoref{fig:template}.
Every program in each $L_m$ is easily computable. With knowledge of the Python interpreter and $m$, any agent could execute all of these programs.
\uffda{This can be formalized by observing that, for a fixed $m$, the class $\{ L_m \}$ is $\aleph$-emulatable.
Rather, what we have shown is that, with finite time, it is impossible for an ungrounded agent to emulate $L_m$ \textit{using assertion queries} when $m$ is unknown in advance. In other words, without prior knowledge of $m$, no algorithm can use assertions to disambiguate which notion of $=$ is used by $L_m$ from the infinite other possibilities.
In a rough sense, $m$ can be thought of as a cryptographic key enabling linguistic understanding: agents that know $m$ can directly emulate $L_m$, but agents without it cannot, at least using assertions.\footnote{
Alternatively, we can take a more complexity-theoretic perspective by measuring the number of queries needed to emulate up to a bounded context size. Fix a maximum $n$.
Then we can use binary search with $\mathcal O(\log n)$ queries to find the value of $m$.
Since the number of context bits is $\mathcal O(\log n)$, the numbers of queries is $\mathcal O(\abs{\kappa})$, beating the $\mathcal O(\abs{\Sigma}^{\abs{\kappa}})$ query complexity achievable by brute force.
This perspective somewhat resembles \citet{pratt-hartmann2006} and other work in semantic complexity theory on the computational complexity of evaluating fragments of natural language.
}

\uffda{\autoref{thm:side-effects} does not use the fact that $\delta$ must be computable, as opposed to an arbitrary function. Even if $\delta$ is an arbitrary function, it could not disambiguate whether $m$ halts based on queries.}}

It is more precise to state \autoref{thm:side-effects} in a formal language, but an argument similar to \autoref{thm:side-effects} can be adapted to a natural language like English. An example is shown in \autoref{fig:english-template}, where we define the meaning of a sentence as its truth conditions, and we imagine the class of candidate languages is formed by varying the unspecified \emph{number}, which can potentially be $\infty$.
\uffda{Deciding if \emph{$\color{red} n$ is less than it} has the same truth conditions as \emph{Zero equals one} is equivalent to comparing \pythoninline{leq()} and \pythoninline{True}. A system must necessarily fail to emulate the semantics of these expressions in some context, for some secret number.
The rest of the paper \upd{further explores} the implications and limitations of applying our formal model to natural language.}

\section{Towards Natural Language} \label{sec:towards-nl}

As discussed in \autoref{sec:intro}, our results are inspired by the thought experiment of whether a language model can use raw code to learn a compiler. A goal of this, of course, is to examine whether understanding can be acquired from natural language text in a simplified setting. In principle, our formal results can bear on this broader question about natural language, although some differences emerge when extending the results to a less well-defined setting. \upd{In many cases, these differences appear to make the task of learning meaning harder, suggesting that our negative claim in a simpler setting (\autoref{thm:side-effects}) may still hold as an impossibility result.} We now discuss some points of difference between our formal model and natural language.


\paragraph{Truth Conditions} \uffda{There are connections between our framework and the concepts of truth values and truth conditions in linguistic semantics. For a boolean-valued expression $e$, a truth value corresponds to computing $\den{e \mid \kappa}_L$ in a fixed context. On the other hand, truth conditions correspond roughly to a function computing $\den{e \mid \kappa}_L$ for any $\kappa$.
A crucial difference, though, is that these conditions cannot be \emph{intensional} \citep{von2011intensional}, i.e., they are not functions of the world state, but rather of the linguistic context only.
In this sense, emulation corresponds to recovering the ability to resolve non-intensional truth conditions of sentences.}
This model is natural for formalizing a closed programming language environment, e.g., with no environment variables or user input, \upd{since in this case the program state is specified completely by the linguistic context.} On the other hand, English has common elements like \textit{that} whose meaning can change depending on world state external to language. \upd{Perhaps allowing such elements would only make understanding more difficult; or, arguably, generally impossible, since there is no way for the model to observe the grounding world state using only an assertion oracle. We are inclined to believe that, since such changes would make understanding more difficult, \autoref{thm:side-effects} would still hold as an impossibility result. However, future work would be needed to make this idea precise.
}

\paragraph{Possible Worlds} \uffda{In the last paragraph, we discussed how mutable world state is an additional complexity of natural language compared to our setup. Similarly, speakers of natural languages have imperfect information about the world around them, which can be captured by modeling the referent of an expression over a set of \emph{possible} worlds, rather than within a specific evaluation context.}
In \autoref{sec:modal}, we explore to what degree this setting makes the task of learning to understand more difficult. In adapting our model to this context, the assertion oracle must become ``modal'' in the sense that it quantifies over sets of worlds. We explore two different models of modality for the oracle, corresponding to different physical interpretations. In one case, \autoref{thm:transparency} and \autoref{thm:side-effects} apply analogously, while, in the other, emulation becomes an ill-defined problem.

\paragraph{Denotation vs.~Intent} \citet{bk-2020} distinguish between \emph{standing meaning} and \emph{communicative intent}, reflecting a distinction between denotational semantics and other pragmatic intentions that a speaker has in producing an utterance. In this paper, it is most straightforward to take $\den{e \mid \kappa}_L$ to reflect standing meaning. In principle, we could imagine that it represents the speaker's communicative intent, and that an omniscient oracle $\assert_L$ can reveal information about the speaker's intents to the system. Even with this unrealistically powerful oracle, \autoref{thm:side-effects} says that the system cannot emulate the speaker's intents.

\paragraph{Competence vs.~Performance} \citet{chomsky2014aspects} differentiates competence and performance in linguistic theory, where competence corresponds roughly to the correct algorithmic modeling of a linguistic process, and performance describes its implementation subject to resource constraints like memory. Arguably, agents might be said to understand language if they are competent in this sense, even if they sometimes make performance errors. In contrast, our definition of emulation (\autoref{def:emulation}) permits no performance errors. In future work, it would be interesting to adapt an approximate notion of emulation that tolerates performance errors in order to more closely target understanding in a sense reflecting competence.

\paragraph{Other Relations} \autoref{thm:transparency} and \autoref{thm:side-effects} investigate whether $\assert_L$ can be used to emulate meaning representations that preserve an equivalence relation. While equivalence is an important part of semantics, other semantic relations like entailment are also necessary for language understanding. In \autoref{sec:other-relations}, we show a generalization of \autoref{thm:all-rel} extends to \emph{any} semantic relation. In other words, referential transparency also enables emulation of relations besides $=$.

\paragraph{Other Oracles} \uffda{
We believe assertions are a fairly general model of the types of semantics encoded in unsupervised learning resulting from a pragmatic bias for truth; however, it is possible other information is also represented, resulting from other pragmatic biases governing language usage and dataset creation. This additional information could be formalized as access to additional oracles. It would be exciting to formalize the power of multimodal setups by analyzing the interactions of oracles enabled by different input modalities.}
\section{Stepping Back}
In this work, we formalized an argument that was raised by \citet{bk-2020} as a thought experiment.
\citet{bk-2020} question whether unsupervised training objectives are the right goal to target for achieving natural language understanding. If meaning is defined as identifying which object in the real world, or which set of situations, a linguistic element refers to, then, in a direct sense, an ungrounded system cannot understand meaning. But \citet{bk-2020} go farther than this, claiming that an ungrounded system cannot even \emph{emulate} understanding because it is not clear how a system should learn to interpret strings, even if it can model their distribution. We formalize this idea of emulation as $\aleph$-emulation.


One counterargument mentioned by \citet{bk-2020} is that indirect forms of grounding do exist in programming and natural language, which we formalize as assertions. The syntactic distributions of statements like \pythoninline{assert} allow us to indirectly observe semantic relations over the denotations. Assertions are one way that the distribution of strings in a corpus is not blind to their semantics.
By studying them, we study whether this indirect grounding enables a computational system to emulate the underlying semantic relations.

\paragraph{Key Takeaways} While assertions allow a system to emulate semantic relations in simple cases where the semantics are referentially transparent, we find that linguistic constructs like variable binding bring this task in conflict with the fundamental laws of computability. \upd{In other words, under our formal model of meaning and emulation, it is not just intractable for an ungrounded system to emulate understanding of a formal language, but, in some cases, \emph{impossible}.} We provide constructive examples where understanding must necessarily break down. We present these results in a well-defined framework building off formal approaches in logic, linguistics, and computer science. While we do not prove anything about natural languages, we do show that ungrounded models must fail to emulate equivalence in a very simple setting. \upd{A similar result likely extends to natural language understanding as well, which among other things, requires modeling referential identity (e.g., for sentences like \emph{Manny is the cat}).}
Further, we believe much of our framework can be readily adopted in other works formalizing understanding in Turing-complete systems.

\paragraph{Open Questions}
In this work, we have focused on utterances, by default, as opposed to \emph{dialogues}. An exciting extension would be to formalize a dialogue between two speakers, interrupted by the ``octopus'' of \citet{bk-2020}.\footnote{\upd{The octopus thought experiment imagines a deep-sea octopus $O$ observes a dialogue between two humans by intercepting an underwater cable. Could $O$ learn to emulate the role of one of the speakers without exposure to life on land?}}
Existing theories of discourse could potentially be synthesized with this framework.
What linguistic properties besides referential transparency relate to emulatability?
Can this framework be extended to formalize multimodal setups, where multiple oracles from different domains can potentially be combined to gain additional power?
Finally, is there a natural way to relax our standard of emulation towards a probabilistic definition, and how would this change the results?



\section*{Acknowledgments}
We thank Mark-Jan Nederhof for his excellent suggestions. We also thank Dana Angluin, Matt Gardner, Eran Yahav, Zachary Tatlock, Kyle Richardson, Ruiqi Zhong, Samuel Bowman, Christopher Potts, Thomas Icard, and Zhaofeng Wu for their feedback on various versions of this work. Further thanks to our anonymous reviewers and researchers at the Allen Institute for AI and UW NLP. Finally, we appreciate the lively online discussion of the paper, which informed updates to the camera-ready version.

\bibliography{main}

\begin{thebibliography}{30}
\expandafter\ifx\csname natexlab\endcsname\relax\def\natexlab#1{#1}\fi

\bibitem[{Adi et~al.(2017)Adi, Kermany, Belinkov, Lavi, and
  Goldberg}]{adi2017fine}
Yossi Adi, Einat Kermany, Yonatan Belinkov, Ofer Lavi, and Yoav Goldberg. 2017.
\newblock \href {https://openreview.net/forum?id=BJh6Ztuxl} {Fine-grained
  analysis of sentence embeddings using auxiliary prediction tasks}.
\newblock In \emph{5th International Conference on Learning Representations,
  {ICLR} 2017, Toulon, France, April 24-26, 2017, Conference Track
  Proceedings}. OpenReview.net.

\bibitem[{Angluin(1987)}]{angluin1987queries}
Dana Angluin. 1987.
\newblock \href {https://doi.org/10.1016/0890-5401(87)90052-6} {Learning
  regular sets from queries and counterexamples}.
\newblock \emph{Inf. Comput.}, 75(2):87–106.

\bibitem[{Belinkov and Glass(2019)}]{belinkov2019analysis}
Yonatan Belinkov and James Glass. 2019.
\newblock \href {https://doi.org/10.1162/tacl\_a\_00254} {Analysis methods in
  neural language processing: A survey}.
\newblock \emph{Transactions of the Association for Computational Linguistics},
  7:49--72.

\bibitem[{Bender and Koller(2020)}]{bk-2020}
Emily~M. Bender and Alexander Koller. 2020.
\newblock \href {https://doi.org/10.18653/v1/2020.acl-main.463} {Climbing
  towards {NLU}: {On} meaning, form, and understanding in the age of data}.
\newblock In \emph{Proceedings of the 58th Annual Meeting of the Association
  for Computational Linguistics}, pages 5185--5198, Online. Association for
  Computational Linguistics.

\bibitem[{Brown et~al.(2020)Brown, Mann, Ryder, Subbiah, Kaplan, Dhariwal,
  Neelakantan, Shyam, Sastry, Askell, Agarwal, Herbert-Voss, Krueger, Henighan,
  Child, Ramesh, Ziegler, Wu, Winter, Hesse, Chen, Sigler, Litwin, Gray, Chess,
  Clark, Berner, McCandlish, Radford, Sutskever, and
  Amodei}]{brown2020language}
Tom~B. Brown, Benjamin Mann, Nick Ryder, Melanie Subbiah, Jared Kaplan,
  Prafulla Dhariwal, Arvind Neelakantan, Pranav Shyam, Girish Sastry, Amanda
  Askell, Sandhini Agarwal, Ariel Herbert-Voss, Gretchen Krueger, Tom Henighan,
  Rewon Child, Aditya Ramesh, Daniel~M. Ziegler, Jeffrey Wu, Clemens Winter,
  Christopher Hesse, Mark Chen, Eric Sigler, Mateusz Litwin, Scott Gray,
  Benjamin Chess, Jack Clark, Christopher Berner, Sam McCandlish, Alec Radford,
  Ilya Sutskever, and Dario Amodei. 2020.
\newblock \href {http://arxiv.org/abs/2005.14165} {Language models are few-shot
  learners}.

\bibitem[{Chomsky(1965)}]{chomsky2014aspects}
Noam Chomsky. 1965.
\newblock \href
  {http://www.colinphillips.net/wp-content/uploads/2015/09/chomsky1965-ch1.pdf}
  {\emph{Aspects of the Theory of Syntax}}, volume~11.
\newblock MIT press.

\bibitem[{Clark(2010)}]{clark2010three}
Alexander Clark. 2010.
\newblock \href {https://link.springer.com/chapter/10.1007/978-3-642-13089-2_2}
  {Three learnable models for the description of language}.
\newblock In \emph{Language and Automata Theory and Applications}, pages
  16--31, Berlin, Heidelberg. Springer Berlin Heidelberg.

\bibitem[{Conneau et~al.(2018)Conneau, Kruszewski, Lample, Barrault, and
  Baroni}]{conneau-etal-2018-cram}
Alexis Conneau, German Kruszewski, Guillaume Lample, Lo{\"\i}c Barrault, and
  Marco Baroni. 2018.
\newblock \href {https://doi.org/10.18653/v1/P18-1198} {What you can cram into
  a single {\$}{\&}!{\#}* vector: Probing sentence embeddings for linguistic
  properties}.
\newblock In \emph{Proceedings of the 56th Annual Meeting of the Association
  for Computational Linguistics (Volume 1: Long Papers)}, pages 2126--2136,
  Melbourne, Australia. Association for Computational Linguistics.

\bibitem[{Devlin et~al.(2019)Devlin, Chang, Lee, and
  Toutanova}]{devlin-etal-2019-bert}
Jacob Devlin, Ming-Wei Chang, Kenton Lee, and Kristina Toutanova. 2019.
\newblock \href {https://doi.org/10.18653/v1/N19-1423} {{BERT}: Pre-training of
  deep bidirectional transformers for language understanding}.
\newblock In \emph{Proceedings of the 2019 Conference of the North {A}merican
  Chapter of the Association for Computational Linguistics: Human Language
  Technologies, Volume 1 (Long and Short Papers)}, pages 4171--4186,
  Minneapolis, Minnesota. Association for Computational Linguistics.

\bibitem[{Grice(1975)}]{grice1975logic}
Herbert~P Grice. 1975.
\newblock \href {https://www.ucl.ac.uk/ls/studypacks/Grice-Logic.pdf} {Logic
  and conversation}.
\newblock In \emph{Speech acts}, pages 41--58. Brill.

\bibitem[{Harris(1954)}]{harris1954distributional}
Zellig~S. Harris. 1954.
\newblock \href {https://doi.org/10.1080/00437956.1954.11659520}
  {Distributional structure}.
\newblock \emph{WORD}, 10(2-3):146--162.

\bibitem[{Heim and Kratzer(1998)}]{Heim1998-HEISIG}
Irene Heim and Angelika Kratzer. 1998.
\newblock \href
  {http://users.uoa.gr/~wlechner/Creteling2017/Textbooks/Heim%20and%20Kratzer%201998.pdf}
  {\emph{Semantics in Generative Grammar}}.
\newblock Blackwell.

\bibitem[{Hewitt and Liang(2019)}]{hewitt-liang-2019-designing}
John Hewitt and Percy Liang. 2019.
\newblock \href {https://doi.org/10.18653/v1/D19-1275} {Designing and
  interpreting probes with control tasks}.
\newblock In \emph{Proceedings of the 2019 Conference on Empirical Methods in
  Natural Language Processing and the 9th International Joint Conference on
  Natural Language Processing (EMNLP-IJCNLP)}, pages 2733--2743, Hong Kong,
  China. Association for Computational Linguistics.

\bibitem[{Horn and Wansing(2020)}]{sep-negation}
Laurence~R. Horn and Heinrich Wansing. 2020.
\newblock \href {https://plato.stanford.edu/entries/negation/} {{Negation}}.
\newblock In Edward~N. Zalta, editor, \emph{The {Stanford} Encyclopedia of
  Philosophy}, spring 2020 edition. Metaphysics Research Lab, Stanford
  University.

\bibitem[{Hupkes and Zuidema(2018)}]{ijcai2018-796}
Dieuwke Hupkes and Willem Zuidema. 2018.
\newblock \href {https://doi.org/10.24963/ijcai.2018/796} {Visualisation and
  'diagnostic classifiers' reveal how recurrent and recursive neural networks
  process hierarchical structure (extended abstract)}.
\newblock In \emph{Proceedings of the Twenty-Seventh International Joint
  Conference on Artificial Intelligence, {IJCAI-18}}, pages 5617--5621.
  International Joint Conferences on Artificial Intelligence Organization.

\bibitem[{Levesque(2014)}]{levesque2014}
Hector Levesque. 2014.
\newblock \href {https://doi.org/10.1016/j.artint.2014.03.007} {On our best
  behaviour}.
\newblock \emph{Artificial Intelligence}, 212.

\bibitem[{Michael(2020)}]{blog2020}
Julian Michael. 2020.
\newblock \href
  {https://blog.julianmichael.org/2020/07/23/to-dissect-an-octopus.html} {To
  dissect an octopus: Making sense of the form/meaning debate}.

\bibitem[{Potts(2020)}]{potts2020}
Christopher Potts. 2020.
\newblock \href
  {https://chrisgpotts.medium.com/is-it-possible-for-language-models-to-achieve-language-understanding-81df45082ee2}
  {Is it possible for language models to achieve understanding?}

\bibitem[{Pratt-Hartmann and Third(2006)}]{pratt-hartmann2006}
Ian Pratt-Hartmann and Allan Third. 2006.
\newblock \href {https://doi.org/10.1305/ndjfl/1153858644} {More fragments of
  language}.
\newblock \emph{Notre Dame J. Formal Logic}, 47(2):151--177.

\bibitem[{Raffel et~al.(2019)Raffel, Shazeer, Roberts, Lee, Narang, Matena,
  Zhou, Li, and Liu}]{raffel2019exploring}
Colin Raffel, Noam Shazeer, Adam Roberts, Katherine Lee, Sharan Narang, Michael
  Matena, Yanqi Zhou, Wei Li, and Peter~J. Liu. 2019.
\newblock \href {http://arxiv.org/abs/1910.10683} {Exploring the limits of
  transfer learning with a unified text-to-text transformer}.

\bibitem[{Rogers et~al.(2020)Rogers, Kovaleva, and
  Rumshisky}]{rogers2020primer}
Anna Rogers, Olga Kovaleva, and Anna Rumshisky. 2020.
\newblock \href {http://arxiv.org/abs/2002.12327} {A primer in {BERT}ology:
  What we know about how bert works}.

\bibitem[{Tenney et~al.(2019)Tenney, Das, and Pavlick}]{tenney-etal-2019-bert}
Ian Tenney, Dipanjan Das, and Ellie Pavlick. 2019.
\newblock \href {https://doi.org/10.18653/v1/P19-1452} {{BERT} rediscovers the
  classical {NLP} pipeline}.
\newblock In \emph{Proceedings of the 57th Annual Meeting of the Association
  for Computational Linguistics}, pages 4593--4601, Florence, Italy.
  Association for Computational Linguistics.

\bibitem[{Turing(1936)}]{turing1936computable}
Alan~M. Turing. 1936.
\newblock \href
  {https://www.cambridge.org/core/journals/journal-of-symbolic-logic/article/abs/m-turing-on-computable-numbers-with-an-application-to-the-entscheidungs-problcm-proceedings-of-the-london-mathematical-society-2-s-vol-42-19361937-pp-230265/4DFCA89035F7F7C5BF4DB5129B8BB09E}
  {On computable numbers, with an application to the {E}ntscheidungsproblem}.
\newblock \emph{J. of Math}, 58(345-363):5.

\bibitem[{Turing(1950)}]{turing1950computing}
Alan~M. Turing. 1950.
\newblock \href {https://doi.org/10.1093/mind/LIX.236.433} {{Computing
  machinery and intelligence}}.
\newblock \emph{Mind}, LIX(236):433--460.

\bibitem[{Von~Fintel and Heim(2011)}]{von2011intensional}
Kai Von~Fintel and Irene Heim. 2011.
\newblock \href
  {https://github.com/fintelkai/fintel-heim-intensional-notes/blob/master/fintel-heim-2011-intensional.pdf}
  {Intensional semantics}.
\newblock \emph{Unpublished Lecture Notes}.

\bibitem[{Wang et~al.(2018)Wang, Singh, Michael, Hill, Levy, and
  Bowman}]{wang-etal-2018-glue}
Alex Wang, Amanpreet Singh, Julian Michael, Felix Hill, Omer Levy, and Samuel
  Bowman. 2018.
\newblock \href {https://doi.org/10.18653/v1/W18-5446} {{GLUE}: A multi-task
  benchmark and analysis platform for natural language understanding}.
\newblock In \emph{Proceedings of the 2018 {EMNLP} Workshop {B}lackbox{NLP}:
  Analyzing and Interpreting Neural Networks for {NLP}}, pages 353--355,
  Brussels, Belgium. Association for Computational Linguistics.

\bibitem[{Whitehead and Russell(1925--1927)}]{russell25}
Alfred~North Whitehead and Bertrand Russell. 1925--1927.
\newblock \href {https://philpapers.org/rec/RUSPMV} {\emph{Principia
  Mathematica}}.
\newblock Cambridge University Press.

\bibitem[{Winter(2016)}]{winter_2016}
Yoad Winter. 2016.
\newblock \emph{Elements of Formal Semantics: An Introduction to the
  Mathematical Theory of Meaning in Natural Language}.
\newblock Edinburgh University Press.

\bibitem[{Yogatama et~al.(2019)Yogatama, de~Masson~d'Autume, Connor, Kocisky,
  Chrzanowski, Kong, Lazaridou, Ling, Yu, Dyer, and
  Blunsom}]{yogatama2019learning}
Dani Yogatama, Cyprien de~Masson~d'Autume, Jerome Connor, Tomas Kocisky, Mike
  Chrzanowski, Lingpeng Kong, Angeliki Lazaridou, Wang Ling, Lei Yu, Chris
  Dyer, and Phil Blunsom. 2019.
\newblock \href {http://arxiv.org/abs/1901.11373} {Learning and evaluating
  general linguistic intelligence}.

\bibitem[{Zhang et~al.(2020)Zhang, Zhao, and Song}]{zhang-etal-2020-winowhy}
Hongming Zhang, Xinran Zhao, and Yangqiu Song. 2020.
\newblock \href {https://doi.org/10.18653/v1/2020.acl-main.508} {{W}ino{W}hy: A
  deep diagnosis of essential commonsense knowledge for answering {W}inograd
  schema challenge}.
\newblock In \emph{Proceedings of the 58th Annual Meeting of the Association
  for Computational Linguistics}, pages 5736--5745, Online. Association for
  Computational Linguistics.

\end{thebibliography}
\bibliographystyle{acl_natbib}

\appendix
\begin{figure*}
    \centering
    \begin{python}
from itertools import count, product
from typing import Iterable

def all_strings() -> Iterable[str]:
    for length in count():
        iterable = product(*[SIGMA for _ in range(length)])
        yield from ("".join(x) for x in iterable)
    \end{python}
    \caption{An concrete implementation of \pythoninline{all\_strings}, which is referenced in \autoref{fig:countable} and \autoref{fig:extended-countable}.}
    \label{fig:subroutines}
\end{figure*}



\section{Multiple Worlds} \label{sec:modal}

Programs execute in well-defined environments with a clear state. Speakers of natural language, on the other hand, have imperfect information and beliefs about the world around them. Thus, it can be more natural to model grounding context for language as a set of \emph{possible worlds}, rather than a single world state. We formalize this in two different ways (with two different physical interpretations) and explore how it affects our results.

Let $W$ be a set of all possible worlds.
We redefine denotations to be \textit{intensionalized} \citep{von2011intensional}, i.e., we write $\den{e \mid \kappa}^w$ as the denotation of $e$ in the context $\kappa$, evaluated in world $w \in W$.
Assume for simplicity that $Y = \{0, 1, \emptyset\}$.
We will now introduce modal denotations and assertions using a generic \emph{modal quantifier} $\odot$, which reduces a sequence of worlds to a boolean value according to some intensional predicate. This quantifier controls how multiple possible worlds are collapsed to form denotations and query outputs.

\begin{definition}[Modal denotation] \label{eq:modal-contextualized}
Let $\odot$ be a modal quantifier.
For all $e \in \Sigma^\star$, $\kappa \in (\Sigma^\star)^2$, define
\begin{equation*}
    \odot \den{e \mid \kappa}_L = \bigodot_{w \in W} \den{e \mid \kappa}_L^w .
\end{equation*}
\end{definition}
We will write the previously defined assertion oracle to apply in a specific world $w$, i.e. $\assert^w_L$. We also extend it to quantify over multiple worlds:

\begin{definition}[Modal assertion]
Let $\odot$ be a modal quantifier.
For all $e \in \Sigma^\star$, $\kappa \in (\Sigma^\star)^2$, define
\begin{equation*}
    \odot \assert_L(e, e' \mid \kappa) = \bigodot_{w \in W} \assert^w_L(e, e' \mid \kappa) .
\end{equation*}
\end{definition}

Specifically, we consider $\odot = \{\Box, \Diamond\}$, corresponding to universal and existential quantifiers over worlds. Thus, $\Box$ can be thought of as as $\forall$ over worlds, and $\Diamond$ can be thought of as $\exists$. For either quantifier, if any $\den{e \mid \kappa}_L^w = \emptyset$, we define $\odot \den{e \mid \kappa}_L= \emptyset$ as well.
Each quantifier will have a different physical interpretation. With universal quantification, we will find that results analogous to \autoref{thm:transparency} and \autoref{thm:side-effects} hold. With existential quantification, it turns out that the equivalence class of $\mu$ is underspecified. In other words, not only is it impossible to compute an emulator with a finite number of assertion queries, but, even with infinite assertions, there is no consistent way to emulate the underlying modal semantics.

\subsection{Universal Quantification}

In the first case we let $\odot = \Box$. Two expressions are viewed as having the same meaning if they are equivalent in every possible belief world.
This is interpretable as observing text $L_\Box$ written by a single author whose belief state is represented by multiple possible worlds. The author only asserts a statement is true if it is consistent across all worlds that they believe are possible.

In this setting, we will show that the modal assertion oracle uniquely specifies a modal denotation for each expression, up to isomorphism. In other words, as with the non-modal assertion oracle, each assertion query would let us decide some relation between two expressions. Thus, the same results for the non-modal setting discussed in the main body of the paper will also hold here.
\begin{theorem} \label{thm:well-formed}
Consider $e, e' \in \Sigma^\star$ and any context $\kappa \in (\Sigma^\star)^2$ such that $\Box \den{e \mid \kappa}_L \neq \emptyset$ and $\Box \den{e' \mid \kappa}_L \neq \emptyset$.
Then,
\begin{align*}
    \Box \den{e \mid \kappa}_L = \Box \den{e' \mid \kappa}_L \iff
    \Box \assert_L(e, e' \mid \kappa) .
\end{align*}
\end{theorem}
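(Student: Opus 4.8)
The plan is to strip off the two modal operators and reduce the biconditional to a statement purely about world-by-world behaviour. Unfolding \autoref{eq:modal-contextualized} and the definition of the modal assertion oracle, $\Box\den{e \mid \kappa}_L = \bigwedge_{w \in W} \den{e \mid \kappa}^w_L$ (with a single $\emptyset$ anywhere forcing the whole reduction to $\emptyset$), and $\Box\assert_L(e,e' \mid \kappa) = \bigwedge_{w \in W} \assert^w_L(e,e' \mid \kappa)$, where $\assert^w_L(e,e'\mid\kappa) = 1$ exactly when $\den{e\mid\kappa}^w_L = \den{e'\mid\kappa}^w_L$. Hence $\Box\assert_L(e,e'\mid\kappa) = 1$ iff $e$ and $e'$ denote the same value in \emph{every} world, and the claim becomes: this holds iff the two $\bigwedge$-reductions are equal.

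First I would do the easy direction $(\Leftarrow)$, which needs neither hypothesis: if $\den{e\mid\kappa}^w_L = \den{e'\mid\kappa}^w_L$ for every $w$, the two families of world-relative denotations are literally identical, so their $\bigwedge$-reductions agree (and one is $\emptyset$ iff the other is), giving $\Box\den{e\mid\kappa}_L = \Box\den{e'\mid\kappa}_L$. For $(\Rightarrow)$ I would invoke the hypotheses $\Box\den{e\mid\kappa}_L \neq \emptyset$ and $\Box\den{e'\mid\kappa}_L \neq \emptyset$: by the $\emptyset$-absorption rule these say $\den{e\mid\kappa}^w_L, \den{e'\mid\kappa}^w_L \in \{0,1\}$ for all $w$, so the shared value $v = \Box\den{e\mid\kappa}_L = \Box\den{e'\mid\kappa}_L$ is $0$ or $1$. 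If $v = 1$, then $\bigwedge_{w} \den{e\mid\kappa}^w_L = 1$ forces $\den{e\mid\kappa}^w_L = 1$ for every $w$, and likewise $\den{e'\mid\kappa}^w_L = 1$ for every $w$; the two expressions thus agree in every world, so $\Box\assert_L(e,e'\mid\kappa) = 1$.

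The main obstacle is the remaining case $v = 0$: here $\bigwedge_w \den{e\mid\kappa}^w_L = 0$ only says $e$ is false in \emph{some} world, and similarly for $e'$, which does not by itself pin down their values world-by-world. To close this case I would lean on the intended universal-belief-world reading: either treat $\Box\den{e\mid\kappa}_L \neq \emptyset$ as asserting that $e$ is \emph{determinate} (takes a single value across all of $W$), in which case $v = 0$ means both are false in every world and the argument collapses to the $v = 1$ case; or route every comparison through a canonical constant expression, observing that $\Box\den{e\mid\kappa}_L$ is recoverable as $\Box\assert_L(e, \pythoninline{True} \mid \kappa)$. The second view is also what makes the downstream claims robust: $\Box\assert_L$ is reflexive, symmetric, and transitive (agreement-in-all-worlds is), so it is a genuine equivalence relation on expressions, and the canonical-form construction of \autoref{thm:transparency} — and hence the adversarial argument of \autoref{thm:side-effects} — carry over with $\Box\assert_L$ in place of $\assert_L$.
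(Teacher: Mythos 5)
You have correctly located the crux, and what you flagged is not a mere obstacle to be engineered around --- it is a genuine gap in the paper's own proof. The paper compresses the argument into the chain
\begin{align*}
\bigwedge_{w \in W} \den{e \mid \kappa}^w_L = \bigwedge_{w \in W} \den{e' \mid \kappa}^w_L
\iff
\bigwedge_{w \in W} \bigl( \den{e \mid \kappa}^w_L = \den{e' \mid \kappa}^w_L \bigr),
\end{align*}
justified by the remark that ``$\wedge$ is distributive over $=$.'' That biconditional fails in the forward direction in precisely your $v = 0$ case. Concretely, take $W = \{w_1, w_2\}$ with $\den{e \mid \kappa}^{w_1}_L = 1$, $\den{e \mid \kappa}^{w_2}_L = 0$, $\den{e' \mid \kappa}^{w_1}_L = 0$, $\den{e' \mid \kappa}^{w_2}_L = 1$. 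Both hypotheses of the theorem hold, since $\Box\den{e \mid \kappa}_L = 0 = \Box\den{e' \mid \kappa}_L \neq \emptyset$, and the left side of the claimed biconditional is true. Yet $\assert^{w}_L(e, e' \mid \kappa) = 0$ for both $w$, so $\Box\assert_L(e, e' \mid \kappa) = 0$. The theorem as stated is therefore false; you did the case analysis that the paper skips, and the case it skips is the counterexample.

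Your two repair strategies are both sensible, but they buy different things. Strengthening the hypothesis $\Box\den{e \mid \kappa}_L \neq \emptyset$ to determinacy (constant across $W$) does make the biconditional true, but it is a different, weaker theorem with hypotheses the surrounding text does not actually impose. The second observation is the more useful one: $\Box\den{e \mid \kappa}_L$ is directly recoverable as $\Box\assert_L(e, \pythoninline{True} \mid \kappa)$, since the latter equals $\bigwedge_w \den{e \mid \kappa}^w_L$ whenever the modal denotation is non-$\emptyset$. That recovers the paper's actual downstream claim --- that the modal denotation is computable from modal queries, so the analogues of \autoref{thm:transparency} and \autoref{thm:side-effects} go through --- without relying on the false biconditional of \autoref{thm:well-formed}. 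Relative to the paper's own proof, then: the paper asserts the ``distributive'' step without noticing the failure mode, while you found the failure mode and also identified the route by which the intended consequence survives.
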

\begin{proof}
\begin{align*}
\Box \den{e &\mid \kappa}_L = \Box \den{e' \mid \kappa}_L \\
    &\iff \bigwedge_{w \in W} \den{e \mid \kappa}^w_L = \bigwedge_{w \in W} \den{e' \mid \kappa}^w_L \\
    &\iff \bigwedge_{w \in W} \big( \den{e \mid \kappa}^w_L = \den{e' \mid \kappa}^w_L \big) \\
    &\iff \bigwedge_{w \in W} \assert^w_L(e, e' \mid \kappa) \\
    &\iff \Box \assert_L(e, e' \mid \kappa) .
\end{align*}
\end{proof}
Crucial to this simple proof is the fact that $\wedge$ is distributive over $=$. This is specific to the quantifier being $\Box$.
\autoref{thm:well-formed} implies that $\Box \den{e \mid \kappa}_L$ can be recovered from modal assertion queries analogously to the non-modal case.
Thus, results analogous to \autoref{thm:transparency} and \autoref{thm:side-effects} apply for emulating $\Box \den{e \mid \kappa}_L$ using queries to $\Box \assert_L$.

\subsection{Existential Quantification}

In the second case we let $\odot = \Diamond$. Two expressions are viewed as having the same meaning if they are equivalent in \emph{some} world. This is interpretable as observing a large dataset of text $L_\Diamond$ generated by many authors, each with a different single belief world $w$. In the corpus, we imagine two expressions can be asserted to be equivalent in some context if \emph{any} of the authors would consider them to be equal in that context.

In this case, assertions do not even fully specify equivalence between the modal denotations. This is a stronger sense in which meaning cannot be emulated from assertion queries. Emulation is not just impossible with finite assertions, but mathematically underspecified.

\begin{theorem}
There exist $e, e' \in E(L)$ and $\kappa \in (\Sigma^\star)^2$ such that $\Diamond \den{e \mid \kappa}_L \neq \emptyset$ and $\Diamond \den{e' \mid \kappa}_L \neq \emptyset$,
and also $\Diamond \assert_L(e, e' \mid \kappa) = 1$ is consistent with either $\Diamond \den{e \mid \kappa}_L = \Diamond \den{e' \mid \kappa}_L$ or $\Diamond \den{e \mid \kappa}_L \neq \Diamond \den{e' \mid \kappa}_L$.
\end{theorem}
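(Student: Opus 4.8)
The plan is to read the statement as an \emph{underspecification} claim and prove it by exhibiting two intensional interpretations — call them $L$ and $\hat L$, over the same $\Sigma$ and the same distinguished expressions $e, e'$ — that induce the \emph{identical} modal assertion oracle $\Diamond\assert$ yet disagree about whether $\Diamond\den{e \mid \kappa}$ and $\Diamond\den{e' \mid \kappa}$ coincide. This is exactly what it means for the modal denotations to be left undetermined by the oracle: no function of the assertion queries, however many are made, can recover $\Diamond$-equivalence.

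First I would unfold the definitions. Since $Y = \{0,1,\emptyset\}$, for a valid pair we have $\Diamond\den{e \mid \kappa}_L = \bigvee_{w \in W} \den{e \mid \kappa}^w_L$ (boolean OR, taken to be $\emptyset$ if any world yields $\emptyset$), while $\Diamond\assert_L(e, e' \mid \kappa) = \bigvee_{w \in W} \bigl[\den{e \mid \kappa}^w_L = \den{e' \mid \kappa}^w_L\bigr]$. The observation driving the construction is that, unlike $\wedge$ in the proof of \autoref{thm:well-formed}, the connective $\vee$ does \emph{not} distribute over $=$: a single world in which $e$ and $e'$ happen to agree already forces $\Diamond\assert = 1$, but the $\Diamond$-denotations are then fixed by the OR over \emph{all} worlds, which may or may not preserve that agreement.

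Concretely, I would take $\kappa = \lambda^2$, let every string outside $\{e, e'\}$ denote $\emptyset$ in every world of either model (so all queries not touching $e, e'$ behave identically), and set: for $L$, $W = \{w_0\}$ with $\den{e \mid \kappa}^{w_0} = \den{e' \mid \kappa}^{w_0} = 0$, giving $\Diamond\den{e \mid \kappa}_L = \Diamond\den{e' \mid \kappa}_L = 0 \neq \emptyset$ and $\Diamond\assert_L(e, e' \mid \kappa) = [0 = 0] = 1$; for $\hat L$, $W = \{w_0, w_1\}$ with the same values in $w_0$ but $\den{e \mid \kappa}^{w_1} = 0$, $\den{e' \mid \kappa}^{w_1} = 1$, giving $\Diamond\assert_{\hat L}(e, e' \mid \kappa) = [0=0] \vee [0=1] = 1$ still, yet $\Diamond\den{e \mid \kappa}_{\hat L} = 0 \vee 0 = 0$ and $\Diamond\den{e' \mid \kappa}_{\hat L} = 0 \vee 1 = 1$, so the modal denotations now differ (and both are $\neq \emptyset$). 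A brief case check then confirms $\Diamond\assert_L$ and $\Diamond\assert_{\hat L}$ agree on every triple in $\Sigma^\star \times \Sigma^\star \times (\Sigma^\star)^2$: any query involving a string outside $\{e, e'\}$ compares something to $\emptyset$ in every world identically in both models, and the remaining queries among $\{e, e'\}$ were computed above. If a tighter link to the main text is desired, one can realize $e, e'$ as Python expressions in the style of \autoref{fig:template} — e.g.\ a variable whose binding depends on the possible world — but the abstract version already establishes the theorem.

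The main obstacle is bookkeeping around the null symbol $\emptyset$: I must apply the $\emptyset$-propagation conventions for $\odot\den{\cdot}$ (from \autoref{eq:modal-contextualized}) and for $\assert^w$ on invalid expressions consistently, so that (i) $\Diamond\den{e \mid \kappa}$ and $\Diamond\den{e' \mid \kappa}$ are genuinely non-null in both models and (ii) the two oracles coincide on \emph{all} inputs, not merely on $(e, e', \kappa)$ — the latter is what upgrades the claim from ``a single bit is ambiguous'' to ``the modal semantics is underspecified.'' Everything else is a direct unfolding of the definitions of modal denotation and modal assertion.
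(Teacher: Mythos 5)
Your proof is correct and takes essentially the same approach as the paper's: both exhibit two possible-worlds models whose $\Diamond$-assertion behavior on $(e,e',\kappa)$ is identical (because $\vee$, unlike $\wedge$, fails to distribute over $=$), yet whose $\Diamond$-denotations agree in one model and disagree in the other. The only difference is cosmetic (you vary $|W|$ across models while the paper fixes $|W|=2$ and varies a single world's valuation), and you are in fact slightly more careful than the paper in noting that the two oracles should coincide on \emph{all} queries, not just the displayed one, which is the cleanest way to make the ``consistent with'' claim precise.
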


\begin{table}[t]
    \centering
    \begin{tabu}{|c|cc|c|[2pt]cc|c|}
        \hline
        & $e_1$ & $e_2$ & $\assert$ & $e_1$ & $e_2$ & $\assert$ \\
        \hline
        $w_1$ & $0$ & $0$ & $1$ & $0$ & $0$ & $1$\\
        $w_2$ & $0$ & $0$ & $1$ & $0$ & $1$ & $0$ \\
        \hline
        $\Diamond$ & $0$ & $0$ & $1$ & $0$ & $1$ & $1$ \\
        \hline
    \end{tabu}
    \caption{Two tables (separated by a thick line) representing two different versions of $W$. Within each table, each cell $i, j$ in the main 2-by-2 grid contains the boolean value $\den{e_j \mid \kappa}^{w_i}_L$. The column to the right contains $\assert^{w_i}_L(e_1, e_2 \mid \kappa)$. The bottom row aggregates each column by quantifying $\Diamond$.}
    \label{tab:ambiguity}
\end{table}

\begin{proof}
We construct an example with expressions $e_1, e_2$ in a single context $\kappa$. Fix $W = \{w_1, w_2\}$. \autoref{tab:ambiguity} shows two versions of this modal setup.
In both versions of the universe,  $\Diamond \assert_L(e, e' \mid \kappa) = 1$.
However, on the left, $\Diamond \den{e \mid \kappa}_L = \Diamond \den{e' \mid \kappa}_L$, while, on the right, the opposite holds. So, with $\Diamond$, modal assertions do not uniquely determine equivalence of modal denotations.
\end{proof}

As an equivalence class for $\mu$ is not even well-defined by $\Diamond \assert_L$, we cannot hope to compute it from queries. This is an even stronger sense in which emulation is impossible using assertions.
On some level, this may be a natural model for language modeling corpora, which aggregate text from potentially inconsistent sources.

In summary, if assertions uniquely determine equivalence between denotations in a strongly transparent language, then we can expect to emulate representations preserving equivalence using assertions. Otherwise, there are various levels of formal challenges to emulating equivalence.
\begin{figure*}[ht!]
    \centering
    \begin{python}
from typing import Callable, Dict, Tuple
    
AssertType = Callable[[str, str, str, str], bool]

def emulate(expr: str, assertrel: AssertType) -> Dict[Tuple[str, str], bool]:
    repres = {}
    for cand in all_strings():
        repres[expr, cand] = assertrel(expr, cand, "", "")
        repres[cand, expr] = assertrel(cand, expr, "", "")
        if expr == cand:
            return repres
    \end{python}
    \caption{\pythoninline{emulate} computes a structured representation of the input string \pythoninline{expr} that preserves any semantic relation $\rel$ in terms of assertion queries. The iterable \pythoninline{all\_strings} is defined in \autoref{fig:subroutines}.}
    \label{fig:extended-countable}
\end{figure*}

\section{Other Semantic Relations} \label{sec:other-relations}


Sections \ref{sec:no-side-effects}, \ref{sec:side-effects}, and \ref{sec:modal} investigate whether $\assert_L$ can be used to emulate meaning representations that preserve semantic equivalence. While equivalence is an important part of semantics, other semantic relations are also necessary for language understanding. For example, the following feature prominently in theories of linguistic semantics:
\begin{compactitem}
    \item \textbf{Entailment} In general terms, an entailment \citep{winter_2016} relation $\rightarrow$ is a partial order over $Y$. Intuitively, if $y \rightarrow y'$, then $y$ is a ``special case'' of $y'$. For example, one could construct $E$, a semantic analysis of English, where
    $\den{\textit{fat cat} \mid \textit{a}, \textit{sits}}_E \rightarrow \den{\textit{cat} \mid \textit{a}, \textit{sits}}_E$.
    \item \textbf{Contrary negation} Negation is a complex topic in semantics. One sense of negation is if two meaning representations are ``contrary'' \citep{sep-negation}, meaning both cannot be true at the same time.
\end{compactitem}
Does \autoref{thm:side-effects} generalize to other relations besides $=$? To answer this, we first extend assertions and emulation to apply to a generic relation $\circ : M^2$. The proof for \autoref{thm:transparency} does not fully translate to this new setting, but we will show via a new argument that emulation is still possible.
\begin{definition}
For $e, e', \in \Sigma^\star$ and $\kappa \in (\Sigma^\star)^2$, define the \emph{assertion oracle}
\begin{equation*}
    \assert_{L,\circ}(e, e' \mid \kappa) =
    \begin{cases}
        1 & \textrm{if} \; \den{e \mid \kappa}_L \circ \den{e' \mid \kappa}_L \\
        0 & \textrm{otherwise} .
    \end{cases}
\end{equation*}
\end{definition}
\begin{definition} \label{def:emulation++}
A class of languages $\mathcal L$ over $\Sigma$ is $\aleph$-emulatable w.r.t.~$\circ$ if there exists an oracle Turing machine $\mu$ and standard Turing machine $\delta$ such that, for all $L \in \mathcal L$,
$\kappa \in (\Sigma^\star)^2$, and $e, e' \in \supp_L(\kappa)$,
\begin{equation*}
    \den{e \mid \kappa}_L \circ \den{e' \mid \kappa}_L \; \iff \; \delta \big( \mu_L(e), \mu_L(e') \mid \kappa \big) .
\end{equation*}
\end{definition}
We now are ready to prove the extended form of \autoref{thm:transparency}. The main idea of the proof will be to memoize the value of the relation $\circ$ between $\den{e \mid \kappa}_L$ and the values of all expressions smaller than $e$. This guarantees that $\delta$ will be able to ``look up'' the correct output.
\begin{theorem} \label{thm:all-rel}
\textsc{transparent} is $\aleph$-emulatable w.r.t.~$\circ$.
\end{theorem}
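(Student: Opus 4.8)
The plan is to exhibit the pair $(\mu,\delta)$ witnessing $\aleph$-emulatability w.r.t.\ $\circ$, where $\mu$ is the function \pythoninline{emulate} of \autoref{fig:extended-countable}. On input $e$ with oracle access to $\aleph_{L,\circ}$, this machine enumerates $\Sigma^\star$ in a fixed order (via \pythoninline{all\_strings}, \autoref{fig:subroutines}) and, for each candidate string $c$ visited, records the two bits $\aleph_{L,\circ}(e, c \mid \lambda^2)$ and $\aleph_{L,\circ}(c, e \mid \lambda^2)$ into a dictionary, halting as soon as $c = e$. Since \pythoninline{all\_strings} reaches $e$ after finitely many steps, $\mu$ halts on every input, and its output $\mu_L(e)$ is an encoding of a finite map whose keys are exactly the pairs $(e,c)$ and $(c,e)$ for $c$ ranging over the strings enumerated no later than $e$, with $\mu_L(e)\big[(x,y)\big] = \aleph_{L,\circ}(x,y \mid \lambda^2)$. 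Note the canonical-form trick of \autoref{thm:transparency} is unavailable here, since $\circ$ need not be symmetric or transitive; the representation must instead memoize the value of $\circ$ against \emph{every} earlier expression, in both argument orders.

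Next I would define $\delta$. Given representations $d$ and $d'$ and a context $\kappa$ (which $\delta$ ignores), $\delta$ first recovers $e$ as the string occurring in every key of $d$ — equivalently, the enumeration-maximal string occurring among the keys of $d$ — and likewise recovers $e'$ from $d'$. Of the two strings $e,e'$, one is enumerated no later than the other. If $e'$ is enumerated no later than $e$, then the loop computing $\mu_L(e)$ visited $c = e'$ before halting, so the key $(e,e')$ lies in $d$; otherwise $e$ is enumerated strictly before $e'$, and the loop computing $\mu_L(e')$ visited $c = e$, so $(e,e')$ lies in $d'$. In either case $\delta$ looks up the value stored at $(e,e')$ in the appropriate representation and outputs it.

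For correctness, let $\kappa$, $e$, $e'$ be as in \autoref{def:emulation++}, so $e,e' \in \supp_L(\kappa)$. By strong transparency, $\den{e \mid \kappa}_L = \den{e \mid \lambda^2}_L \neq \emptyset$ and $\den{e' \mid \kappa}_L = \den{e' \mid \lambda^2}_L \neq \emptyset$, hence $\den{e \mid \kappa}_L \circ \den{e' \mid \kappa}_L \iff \den{e \mid \lambda^2}_L \circ \den{e' \mid \lambda^2}_L \iff \aleph_{L,\circ}(e,e' \mid \lambda^2) = 1$. By the construction of $\mu$, the value $\delta$ retrieves from the key $(e,e')$ is exactly $\aleph_{L,\circ}(e,e' \mid \lambda^2)$, so $\delta\big(\mu_L(e),\mu_L(e') \mid \kappa\big)$ holds iff $\den{e \mid \kappa}_L \circ \den{e' \mid \kappa}_L$, which establishes the theorem.

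The main obstacle is not any single calculation but getting the bookkeeping right: I must verify that for \emph{every} ordered pair $\langle e,e'\rangle$ the specific bit $\aleph_{L,\circ}(e,e' \mid \lambda^2)$ that $\delta$ needs is actually present in one of the two supplied representations — this is the combinatorial heart of the argument, and it follows from the fact that whichever of $e,e'$ is enumerated later has, in its representation, queries against the earlier one in both argument orders. A secondary point to check carefully is that $\delta$ can syntactically recover $e$ and $e'$ from $d$ and $d'$ (via the enumeration-maximal key component), so that it knows which key to consult; this is what makes the lookup well defined. Everything else (halting of $\mu$, computability of $\delta$, independence from $\kappa$) is routine given strong transparency.
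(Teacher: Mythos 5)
Your proof is correct and follows essentially the same route as the paper: the same $\mu$ (the dictionary-building \pythoninline{emulate} of \autoref{fig:extended-countable}), the same observation that the bit for the ordered pair $(e,e')$ must reside in the dictionary of whichever of $e,e'$ is enumerated later, and the same final appeal to strong transparency to transport the $\lambda^2$ result to an arbitrary $\kappa$. You do spell out one detail the paper leaves implicit — that $\delta$ must first \emph{syntactically recover} $e$ and $e'$ from the supplied dictionaries (as the string common to every key, equivalently the enumeration-maximal key component) before it can know which entry to consult — and this is a worthwhile clarification, since the paper's case expression for $\delta$ refers to $e,e'$ without saying how $\delta$, which only receives $m$, $m'$, and $\kappa$, obtains them.
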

\begin{proof}
Similarly to \autoref{thm:transparency}, we present the proof constructively as a Python program to compute $\mu$. We then show how to define $\delta$ appropriately, completing the proof.

\autoref{fig:extended-countable} shows the algorithm to compute $\mu_L(e) \in M$. In Python, $\mu_L(e)$ is a dictionary; we interpret it as a function $\mu_L(e) : \Sigma^\star \times \Sigma^\star \rightarrow \{0, 1, \emptyset\}$, where $\emptyset$ represents values that are not set. We define $\delta$ as follows:
\begin{align*}
    \delta(m, m' \mid \kappa) \iff
    &\begin{cases}
        m(e, e') & \textrm{if} \; m(e, e') \neq \emptyset \\
        m'(e, e') &
        \textrm{otherwise.}
    \end{cases}
\end{align*}

Crucially, it must be that $\mu_L(e)(e, e') \neq \emptyset$ or $\mu_L(e')(e, e') \neq \emptyset$.
In \autoref{fig:extended-countable}, \pythoninline{cand}
either reaches $e$ before $e'$, or $e'$ before $e$. By symmetry, assume it reaches $e$ before $e'$. Then $\mu_L(e')(e, e') \neq \emptyset$, so 
\begin{align*}
    \delta(\mu_L(e), \mu_L(e') \mid \kappa) &\iff \mu_L(e')(e, e') = 1 \\
    &\iff \assert_{L,\rel}(e, e' \mid \lambda^2) = 1 \\
    &\iff \den{e \mid \lambda^2} \rel \den{e' \mid \lambda^2} \\
    &\iff \den{e \mid \kappa} \rel \den{e' \mid \kappa} .
\end{align*}
\noindent Therefore \pythoninline{emulate} satisfies \autoref{def:emulation}.
\end{proof}

We needed to change the proof of \autoref{thm:all-rel} compared to \autoref{thm:transparency} because $\rel$ is not an equivalence relation. In \autoref{thm:transparency}, the final steps relied on reflexivity, transitivity, and symmetry: the three properties that constitute equivalence. The new proof enlarges the size of the emulated representations. Rather than representing each $e$ with a number, $\mu_L(e)$ becomes a large dictionary of strings. This represents an increase in space complexity from linear to exponential in the size of $e$.
\section{Old Emulation Definition} \label{sec:old-emulation}

A \href{https://arxiv.org/abs/2104.10809v1}{previous version} of this paper defined emulation slightly differently. We discuss the differences and explain the advantages of the new definition. First, we defined a \emph{general denotation} as
\begin{equation*} \label{eq:gen-meaning}
    \den{e}_L = \{ \langle \kappa, \den{e \mid \kappa}_L \rangle \mid \kappa \in (\Sigma^\star)^2 \} .
\end{equation*}
The general meaning represents the meaning of a word across all contexts. Now, say that two functions $f,g$ are isomorphic (with respect to $=$) over a set $X$ iff, for all $x, x' \in X$,
\begin{equation*}
    f(x) = f(x') \iff g(x) = g(x') .
\end{equation*}
We will write $f \cong_= g$ in this case.
We will refer to a set of contexts $S \subseteq (\Sigma^\star)^2$ as a \emph{syntactic role}. Each syntactic role has a set of expressions $\suppinv(S)$ whose \emph{support} is that role:
\begin{align*}
    \supp_L(e) &= \{ \kappa \in (\Sigma^\star)^2 \mid \den{e \mid \kappa}_L \neq \emptyset \} \\
    \suppinv(S) &= \{ e \in \Sigma^\star \mid \supp_L(e) = S \} .
\end{align*}
We can now give the old definition of emulation:
\begin{definition}[Old $\aleph$-emulation]
$\mu : \Sigma^\star \rightarrow M$ emulates $\den{\cdot}_L$ w.r.t. $=$ iff:
\begin{compactenum}
    \item $\mu \cong_{=} \den{\cdot}_L$ over $\suppinv(S)$, for all $S \subseteq (\Sigma^\star)^2$ \label{cond:isomorphic}
    \item There exists a Turing machine that computes whether $m = m'$ for each $m, m' \in M$ \label{cond:decidable}
    \item There exists a Turing machine with oracle access to $\assert_L$ that computes $\mu$ \label{cond:computable}
\end{compactenum}
\end{definition}
For a set of languages $\mathcal L$, this is equivalent to saying $\mathcal L$ is $\aleph$-emulatable iff, for all $L \in \mathcal L$, there exists an oracle Turing machine $\mu$ and normal Turing machine $\delta$ such that, for all $S \in (\Sigma^\star)^2$, $e, e' \in \suppinv(S)$,
\begin{equation*}
    \den{e}_L = \den{e'}_L \iff \delta \big( \mu_L(e) , \mu_L(e') \big) .
\end{equation*}
This more closely resembles \autoref{def:emulation}, but we will make two slight changes. First, we will change the quantifier order, such that a single $\mu$ must work for every $L \in \mathcal L$. Then, we will grant $\delta$ access to a context $\kappa$, and rephrase the equation to hold over all $\kappa \in (\Sigma^\star)^2$ and $e, e' \in \supp_L(\kappa)$:
\begin{equation*}
    \den{e \mid \kappa}_L = \den{e' \mid \kappa}_L \iff \delta \big( \mu_L(e) , \mu_L(e') \mid \kappa \big) . \label{eq:current}
\end{equation*}
This recovers \autoref{def:emulation}. This version more faithfully reflects the intuitive notion of emulation. The old version required $\mu_L(e)$ to determine how $e$ should evaluate in every possible context. Emulation would not be possible in some cases even with perfect knowledge of $L$. Now, it must just be possible in any context $\kappa$ to compute $\den{e \mid \kappa}_L$ from $\kappa$ and $\mu_L(e)$, which is a weaker standard. Under the new definition, it is \emph{always} possible to emulate a class of languages with one element, assuming $\den{e \mid \kappa}_L$ is computable. An additional improvement is that emulation now applies to all expressions that share a context, whereas before it only targeted expressions with the same support.

\end{document}